\documentclass[10pt,twocolumn,letterpaper]{article}

\usepackage{xcolor}
\usepackage{cvpr} 
\definecolor{iccvblue}{rgb}{0.21,0.49,0.74}
\definecolor{iccvred}{rgb}{0.8,0.0,0.0} 
\usepackage[pagebackref,breaklinks,colorlinks]{hyperref}
\hypersetup{
    linkcolor=blue,  
    citecolor=blue,  
    filecolor=iccvblue,  
    urlcolor=iccvblue,   
    allcolors=blue   
}

\AtBeginDocument{
    \hypersetup{
        linkcolor=red 
    }
}

\usepackage{pifont}
\usepackage{booktabs}
\usepackage{array}
\usepackage{graphicx}
\usepackage{amsmath}
\usepackage{amssymb}
\usepackage{booktabs}
\usepackage{amsfonts,amssymb}
\usepackage{algorithm}
\usepackage{algpseudocode}
\usepackage{multirow}
\usepackage{float}
\usepackage{caption}
\usepackage{subcaption}
\usepackage{fontawesome}
\usepackage{tabularx,caption}
\usepackage[rightcaption]{sidecap}
\usepackage{colortbl}
\usepackage{psfrag}
\usepackage[percent]{overpic}
\usepackage{url}
\usepackage[accsupp]{axessibility}
\usepackage{tablefootnote}
\usepackage{caption} %
\newcommand{\tabincell}[2]{\begin{tabular}{@{}#1@{}}#2\end{tabular}}
\newcommand{\red}[1]{\textcolor{red}{#1}}
\newcommand{\black}[1]{\textcolor{black}{#1}}
\newcommand{\blue}[1]{\textcolor{blue}{#1}}
\def\ie{\textit{i.e., }}
\definecolor{mygray}{gray}{.9}
\definecolor{lightblue}{RGB}{220, 230, 255} 

\usepackage{threeparttable}

\usepackage{tikz} 

\newcommand{\bigcircle}[1]{\tikz[baseline=(char.base)]{
    \node[shape=circle, draw, fill=black!75, text=white, inner sep=0.4pt, minimum size=4pt] (char) {\textbf{#1}};}}

\newcommand{\cmark}{\ding{51}}%
\newcommand{\xmark}{\ding{55}}%

\def\httilde{\mbox{\tt\raisebox{-.5ex}{\symbol{126}}}}

\begin{document}

\title{From Noise to Meaning: Meaningful Secret Sharing with Tamper Detection for Facial Recognition}

\author{Ajnas Muhammed\textsuperscript{1}, Iurii Medvedev \textsuperscript{1}, Nuno Gonçalves\textsuperscript{1} \\
\textsuperscript{1}{Institute of Systems and Robotics, University of Coimbra, Coimbra, Portugal, 3030-290}\\
{\tt\small \{ajnas.muhammed, iurii.medvedev\}@isr.uc.pt, nunogon@deec.uc.pt}\\
}

\twocolumn[{%
\renewcommand\twocolumn[1][]{#1}%
\maketitle%
}]
\maketitle
\thispagestyle{empty}

\begin{abstract}
 Popularity of AI-based face recognition system directly demands protection of sensitive biometric data used for training. Visual secret sharing is an interesting idea, as it splits facial images into secret shares that look random and spread across many institutions. However, these shares look like noise and can easily spark suspicion and recognized as encrypted content. This makes them open to targeted collection and harvest-now-decrypt-later attacks. Additionally, visual secret sharing does not detect tampering, allowing attackers to modify shares and threaten the integrity of reconstruction. In this paper, we introduce a new method that turns distracting noise-like secret shares into visually appealing cover images with additional cryptographic tamper detection. The proposed technique works with visual secret sharing and introduces cover images to embed the shares using adaptive least significant bit steganography. Here, cover images with perceptual transparency are used to store secret shares while guaranteeing complete privacy. A two layer authentication using strong digital watermarking and cryptographic hashing is used to protect the integrity of shares. The proposed technique shows high resilience in stopping bit-flipping, cropping, and substitution attacks. Extensive experiments on multiple public face datasets show that the technique shows better FR accuracy, while eliminating share conspicuousness and guaranteeing integrity. The proposed framework sets a new standard for protecting facial data in such a way that privacy, security, and integrity are protected.
\end{abstract}


\section{Introduction}
\label{sec:intro}

Face Recognition System (FRS) is an important part of modern security applications ranging from unlocking smartphones~\cite{patel2016secure} to automating border control~\cite{hidayat2024face}. Face images are one of the most significant private personal data and are essential to design an efficient FRS. The security of face data is a critical aspect, as the faces cannot be changed once compromised~\cite{wang2025privacy}. Due to this, regulatory frameworks worldwide have responded to these concerns. The European Union-General Data Protection Regulation (EU-GDPR)~\cite{voigt2017eu}, the California Consumer Privacy Act (CCPA)~\cite{pardau2018california}, and new AI governance frameworks~\cite{ghosh2025artificial}, all set strict rules for collecting, storing and processing face biometric data. Right-To-Be-Forgotten (RTBF) is an important right in article 17 of EU-GDPR, which allows users to permanently remove their personal data~\cite{yaish2019forget}. Regular FR frameworks that depend on centralized storage cannot meet these requirements due to issues such as data replication and lack of data management~\cite{muhammed2025voidface}.

Visual Secret Sharing is a visual data protection mechanism introduced by Naor and Shamir~\cite{naor1994visual}. VSS can be used efficiently as a cryptographic base for privacy-preserving FRS. VSS allows distributed storage without a central point of failure by splitting face images into multiple random shares spread across different institutions. Recently, VOIDFace framework~\cite{voidface} successfully combined VSS with patch-based training with shares of facial patches (eyebrows, eyes, nose and mouth) instead of full face images. Even though conventional VSS has multiple advantages both in terms of privacy and security, two major limitations in real world FRS applicability are,

\begin{enumerate}
	\item Share identifiability: Traditional VSS generates meaningless shares that look like random noise~\cite{muhammed2021novel}. Due to this, these shares cannot be statistically isolated from encrypted content and immediately recognized as cryptographic artifacts. Due to this, attackers can easily find and target shares for harvest-now-decrypt-later attacks~\cite{olutimehin2025future}. The existence of noise-like data makes the presence of VSS systems obvious. In sensitive areas such as biometrics and healthcare, the storage of noise-like data causes suspicion. 

	\item No tamper detection: Traditional VSS protects privacy, but not integrity. An attacker with access can manipulate, delete, or corrupt VSS shares without being caught. This can lead to faulty reconstruction and affect performance, poison training data, or add backdoor attacks~\cite{le2024comprehensive}. Bit rot or storage errors can cause gradual corruption that can make reconstruction less accurate.
\end{enumerate}

Due to these problems, we introduce a novel privacy preserving FRS framework using VSS share generation that solves these problems. The main contributions of this framework are as follows,

\begin{enumerate}
	\item Meaningful Share Generation. The proposed technique introduces a reversible steganographic embedding method that transforms conspicuous noise-like secret shares into visually meaningful cover images. Encrypted face patches are hidden in innocuous cover images using adaptive least significant bit substitution with perceptual optimization. To our knowledge, this is the first application of adaptive LSB steganography to VSS-based facial recognition to defend harvest-now-decrypt-later attacks.

	\item Dual-Layer Tamper Detection. The technique proposes an integrity verification system that combines cryptographic hashing (SHA-256) with robust digital watermarking (DWT spread-spectrum). This defense-in-depth mechanism detects both bit-level modifications and structural tampering (cropping, substitution, compression), achieving $>$99.9\% detection rates with false positives below 0.15\%.
\end{enumerate}

The paper is organized as follows. Section \ref{sec:rel} discusses related works. The proposed framework is explained in section \ref{sec:pm}. The experimental setups and results are explained in Section \ref{sec:ex}. Section \ref{sec:co} concludes the work.

\section{Related works}
\label{sec:rel}

\textbf{VSS and its extensions}: VSS is a cryptographic technique that encrypts visual data into $n$ random looking shares such that $k$ shares reveal the secret when superimposed, known as $(k,n)$VSS. In a $(k,n)$VSS, any collection of shares $ < k$ does not reveal the secret. Traditional VSS scheme are designed for binary secret images and halftoning is used in case of gray and color images~\cite{muhammed2021novel}. However, recent works extended VSS to multiple domains with XOR operation and lossless reconstruction~\cite{nujumudeen2025lightweight}.

Multiple Image Secret Sharing (MISS) is a VSS scheme in which more than one secret image is encrypted within the same access structure. Sasaki and Watanabe~\cite{sasaki2017visual} introduced an MISS access structure by a qualified and forbidden set along with monotonicity and uniqueness. VOIDFace framework~\cite{voidface} used another MISS for face patches using XOR operation to create authentication share and private share, and improves security and privacy. Even with these improvements, the regular VSS shares still stand out. The random looks of VSS shares make them identifiable as encrypted content (ciphertext) and make a key practical deployment limitation~\cite{chen2025essential}.

\textbf{Steganography and Meaningful Shares}: An efficient way to deal with share identifiability is to use steganography. Steganography hides secret information in a harmless cover image. Dworetzky et al. introduced matrix embedding methods with minimum distortion and maximum embedding capacity~\cite{dworetzky2024improving}. Meaningful VSS schemes evolve when steganography and VSS work together. Lin and Tsai introduced techniques to hide VSS shares in cover images~\cite{lin2022large}. Another halftone-based meaningful VSS scheme is introduced in~\cite{liu2026meaningful}. Embedding capacity and reconstruction quality are the main limitations of these methods.

\textbf{Tamper detection}: In VSS, detecting tampering is a less explored area. Researchers added authentication codes to VSS shares, but with high space complexity~\cite{mohammed2024tamper, muhammed2023secure}. Bhardwaj et al.~\cite{bhardwaj2024enhancing} introduced a weak watermarking technique for VSS for tamper detection, but failed in tamper location identification and recovery. Function-correcting codes~\cite{premlal2025function} are a theoretical step towards error correction for computational tasks. Here, specific functions are used for the correct evaluation of the decoded data.

\textbf{Privacy preserving face recognition}: Privacy preserving face recognition consists of different techniques. Homomorphic encryption enables operations on encrypted data, but is computationally complex~\cite{song2025privacy}. Federated learning spreads training across multiple devices, but gradient leakage can still occur~\cite{muhammed2025federated}. Differential privacy gives mathematical guaranties, but makes accuracy degradation~\cite{he2024diff}. Secure Multi-Party Computation lets multiple parties work together on computations, but requires complicated protocols~\cite{elfares2024privateyes}.

VOIDFace framework \cite{voidface} is the first work to combine VSS with patch-based FR training, showing competitive accuracy (97.8\% in LFW) with only secret shares. But VOIDFace's obvious noise shares and lack of integrity verification leave a significant security gap that the proposed technique addresses.

\section{Proposed method}
\label{sec:pm}
\begin{figure*}[!t]
	\centering
	\includegraphics[width=0.95\linewidth]{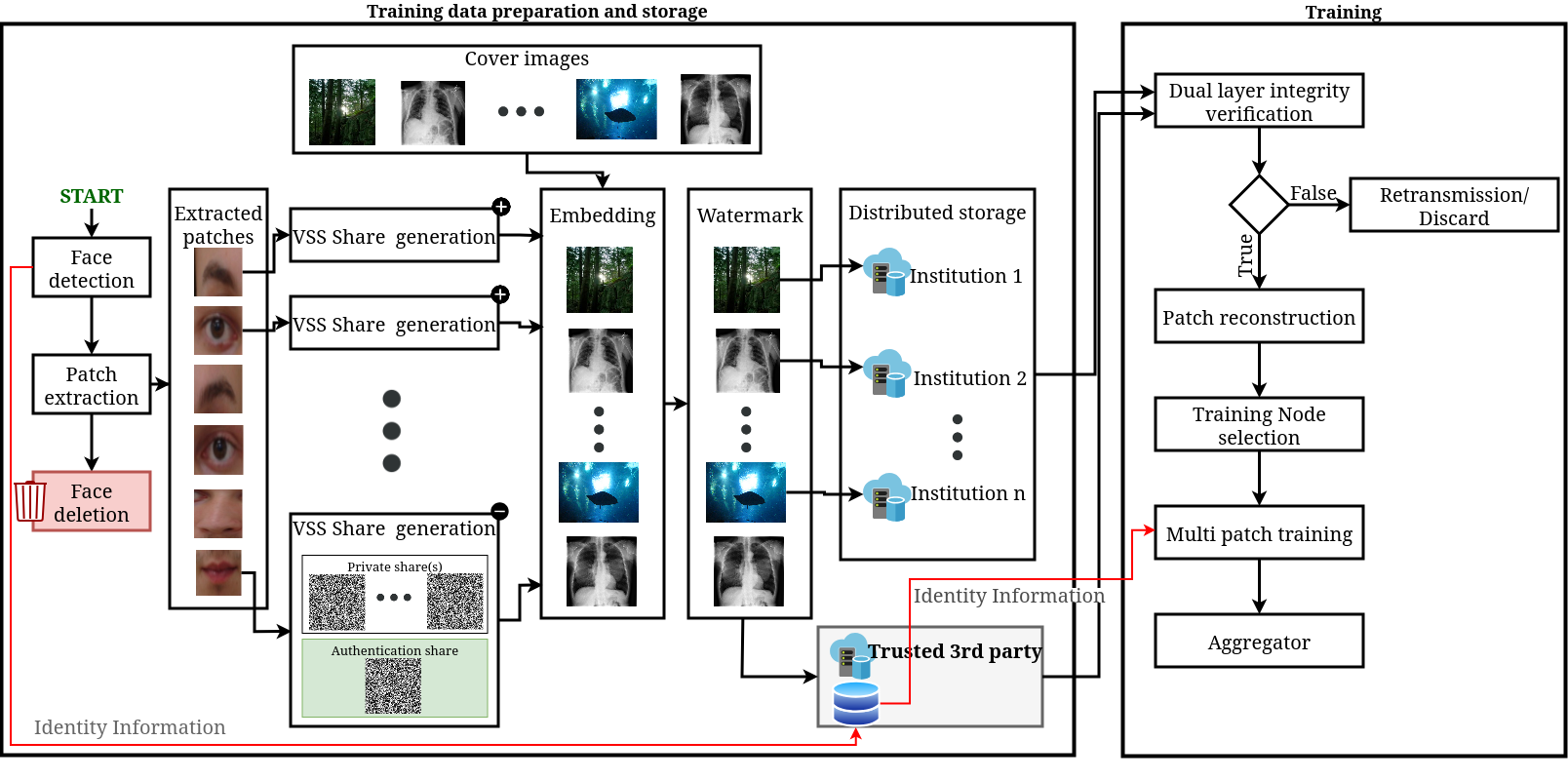}
	\caption{Overall architecture of the proposed technique showing patch extraction, meaningful share generation with watermarking, distributed storage, and verification based reconstruction, and training.}
	\label{fig:proposed}
\end{figure*}
\subsection{System Overview}
The proposed method operates within VSS-based privacy preserving FR framework-VOIDFace, extending on meaningful share generation and tamper detection capabilities. This is a crucial limitation for the real time implementation of VOIDFace. 

\textbf{Assumptions}: Similar to VOIDFace, the proposed technique operates under three assumptions:

\begin{enumerate}
	\item Frontal Faces Only: The framework only processes frontal face images, which gives clear, unobstructed views needed for efficient patch extraction.

	\item Trusted third party processing: The framework starts with face acquisition, patch extraction, and share generation, all accomplished by a trusted third party. Trusted third party also removes the original images after patch reconstruction.
	
	\item Distributed Storage: $N$ separate storage facilities and $N_p$ training workstations are required to store the final shares and training, making this resilient to single-point attacks.
\end{enumerate}

Figure \ref{fig:proposed} illustrates the proposed method's system design. The system first extracts ${\rm N_p}$ patches from each face image. The original face image is permanently deleted after patch extraction. Each patch is then encrypted as private share(s) and authentication share. The authentication share is same for all patches extracted from a particular face, whereas the private shares are different for each patch. Now, instead of storing this conspicuous noise like share, the system embeds them into visually meaningful cover images. These meaningful embedded images are watermarked and distributed to different participating institutions. When training is requested, patches are reconstructed from the meaningful shares. Before reconstitution, each share undergoes dual layer integrity verification to detect any tampering. After verification, the patches are reconstructed and passed on to ${\rm N_p}$ patch training networks, and the output features are aggregated to generate the global face embeddings for recognition.

\subsubsection{Training data protection}
The proposed FR framework starts with training data preparation and storage. The first phase of training data preparation and storage includes patch extraction and VSS share generation. Given a frontal face image ${\rm F}$, extraction of ${\rm N_{p}}$ privacy preserving face patches is carried out. After patch extraction, each ${\rm P_{i}}$ is resized to equal size and ${\rm F}$ is permanently deleted and never stored or transmitted at any stage of the framework.

After patch extraction, each ${\rm P_{i}}$ is passed to the VSS share generation module. As each user has ${\rm N_{p}}$ face patches, a multiple patch secret sharing mechanism is used to include all ${\rm N_{p}}$ patches. Initially, a random matrix of the patch size is generated using the cryptographic random number generator function. This random matrix is designated as the user's authentication share ${\rm AS}$. Similarly, for each ${\rm P_{i}}$, the patch encryption (share generation) ${\rm E_{i}}$ is calculated as Eq. \ref{eq:1}.
\begin{equation}
	E_i = P_i \oplus AS
	\label{eq:1}
\end{equation}
Next is the generation of meaningful VSS shares. In this stage, meaningless noise-like shares are transformed into visually meaningful shares. This process starts with the selection of cover images. The cover images are selected based on the deployment contexts such as medical images (chest radiograph) for healthcare, and landscape for general use. The authentication cover image ${\rm C_{auth}}$ is a single cover image, and private cover images ${\rm C_i}$ are multiple distinct cover images. Cover images are selected in such a way that the selected image must have sufficient embedding capacity and high texture complexity.

After selecting the cover image, ${\rm AS}$ and ${\rm E_i}$ are embedded in ${\rm C_{auth}}$ and corresponding ${\rm C_i}$, respectively. The embedding is performed using adaptive least significant bit (LSB) embedding~\cite{hussain2021enhanced}. The adaptive LSB embedding performs the embedding following adaptive LSB substitution based on the complexity of the local texture.

Let ${\rm \sigma(x,y)}$ be the local standard deviation in a $3 \times 3$ neighborhood, the embedding depth ${\rm d(x,y)}$ is calculated as Eq. \ref{eq:ed}. Here, the embedding depth ${\rm d(x,y)}$ shows the number of LSB modified to embed the secret data. Here, the value of ${\rm d(x,y)}$ is restricted to 1-3 bits per channel.
\begin{equation} \label{eq:ed}	
	d(x,y) = min(3, max(1, \lfloor \frac{\sigma(x,y)}{\sigma^{max}}.3 \rfloor))
\end{equation}
For each pixel ${\rm (x,y,c)}$, the embedding process extracts ${\rm d(x,y)}$ bits from ${\rm E_i}$, replace ${\rm d(x,y)}$ LSBs of cover image pixels ${\rm C_i(x,y,c)}$. The embedding depth is stored in depth map ${\rm D_i}$. The result of this embedding generates a meaningful share ${\rm M_i}$, which is visually indistinguishable from corresponding ${\rm C_i}$. Similar to ${\rm P_{i}}$, ${\rm AS}$ is also embedded into  ${\rm C_{auth}}$ using the same technique. 

The proposed technique incorporates tamper detection as another contribution. Tamper detection ensures share integrity by detecting malicious and accidental tampering before patch reconstruction and FR training. Tamper detection is achieved using the dual layer authentication, i.e., cryptographic hash and robust watermark. Cryptographic hash provides bit-level integrity, whereas robust watermark provides structural integrity. These layers operate sequentially and are complement to each other, such as the former one fails if attacker replace entire share, whereas later fails against bit-flip attacks that preserve the structure. 

For each embedded ${\rm C_i}$ (or $C_{auth}$), the cryptographic hash is computed as:
\begin{equation}
	\label{eq:ch}
	h_i = SHA256(C_i||salt_i)
\end{equation}
where, ${\rm SHA256}$, $||$ and ${\rm salt_i}$ represent the cryptographic hash function, concatenation operation, and salt derived from the shares metadata (share ID and time stamp). Salt ensure that the same secret embedded in different shares produce different and pre-computation attack is unfeasible. 

After the hash generation, they are embedded in reserved pixel positions deterministically chosen from the shares. These locations are spread to resist localized corrections. Each hash bits are replaced in the most significant LSB of the pixel embedding depth to maximize the sensitivity.

During the second layer, robust watermark, shares are converted into frequency domain using spread-spectrum modulation, and digital watermark is embedded inside. The watermark is generated using 176-bit payload defined in Eq. \ref{eq:wp}, including 16-bit share ID, 32-bit timestamp, 96-bit truncated hash (first 96 bit of cryptographic hash) and 32-bit checksum. The watermark is designed to survive benign operations such as JPEG compression and mild scaling, whereas it is vulnerable to malicious tampering, such as cropping, substitution, and perturbation.
\begin{equation}
	\label{eq:wp}	
	W=ID||T||h_{96}||CRC32(ID||T||h_{96})
\end{equation}
The watermark is embedded using spread-spectrum modulation using two level Discrete Wavelet Transform (DWT). The first level decomposes the share image into four subbands ($\{LL_1, LH_1, HL_1, HH_1\}$). Now the second DWT is applied on low frequency sub-band, $LL_1$ ($\{LL_2, LH_2, HL_2, HH_2\}$). The payload $W$ is embedded into pseudo-random sequence $s$ using spread-spectrum encoding. The pseudo random sequence $s$ is calculated as,
\begin{equation}
	\label{eq:s}	
	s = \sum_{i=0}^{|W|-1} w_i . r_i
\end{equation}
where $W$ is the payload with $w_i$ showing the $i^{th}$ payload bit, and $r_i$ shows the random number generated from the seed using a master key and share's meta data.

${\rm C_{auth}}$ and ${\rm C_i}$ after LSB, hash and watermark embedding is termed as ${\rm M_{auth}}$ and ${\rm M_i}$. Now, the final MISS access structure of proposed technique is defined as the minimally refined perfect access structure as,
\begin{align}	
	\begin{split}
		\left(Q^i\right)_0 = \left\lbrace \left\lbrace M_{auth}, \ M_i \right\rbrace \right\rbrace \\
		\left(F^i\right)_0 = 2^S - \left(Q^i\right)_0
	\end{split}
\end{align}
where $\left(Q^i\right)_0$ and $\left(F^i\right)_0$ shows the minimally refined qualified and forbidden sets, respectively.

After embedding, ${\rm M_{auth}}$ and ${\rm M_i}$ are passed to the share distribution phase. In this phase, ${\rm M_{auth}}$ is stored in the trusted third part, whereas ${\rm M_i}$'s are stored in different distributed storage facilities. 

\subsection{Patch reconstruction and training}

When an FR training is initiated, the trusted third party first verifies the activation of authentication share. This facilitates the RTBF property. The user can request trusted third party to remove the authorization share both (${\rm M_{auth}}$) whenever the user wants the data to be ``forgotten". Upon successful authentication, both ${\rm M_{auth}}$ and ${\rm M_i}$ undergo integrity verification, otherwise the system request to remove all ${\rm M_i}$'s associated with the user. 

Integrity verification begins with robust watermark verification. Algorithm \ref{alg:rw} shows the watermark extraction and verification process embedded in the DWT of meaningful share. The algorithm takes the embedded meaningful share ${\rm M}$ (${\rm M_{auth}}$ or ${\rm M_{i}}$), master key ${\rm K_{m}}$, expected share identification ${\rm ID_{exp}}$, expected timestamp ${\rm T_{exp}}$ and time tolerance ${\rm Delta_{T}}$ as input. For each of the 176 payload bits, \textit{DeriveSpreadingSeq} generates a pseudo-random spreading sequence ${\rm r_{j}}$ using ${\rm K_{m}}$ and share's meta data. Later, the correlation between the DWT coefficients and ${\rm r_{j}}$ is decoded as positive and negative correlation. The \textit{ParsePayload} function is used to interpret the structured data from the 176-bit watermark bitstream $(\mathcal{W})$. The algorithm verifies the CRC32 checksum and checks that the extracted share identifier and timestamp match the expected values within a tolerance ${\rm Delta_{T}}$, and returns ``True" if all checks are passed.

\begin{algorithm}[!h]
	\caption{Robust watermark verification}
	\label{alg:rw}
	\begin{algorithmic}[1]
		\Require $M$, $K_m$, $ID_{exp}$, $T_{exp}$, $\Delta_T$
		\Ensure Verification status, $\mathcal{W}$
		\State $\text{DWT2}(M) = (LL_2, LH_2, HL_2, HH_2)$ 
		\State $L = \text{length}(LH_2)$
		\State $\text{bits} = []$
		
		\For{$j = 0$ \textbf{to} $175$} 
		\State $\mathbf{r}_j = \text{DeriveSpreadingSeq}(K_m, ID_{exp}, T_{exp}, j, L)$
		\State $\text{corr} = \frac{1}{L} \sum_{i=0}^{L-1} \frac{LH_2[i] + HL_2[i]}{2} \cdot \mathbf{r}_j[i]$  
		\If{$\text{corr} > 0$}
		\State $\text{bits}[j] = 1$
		\Else
		\State $\text{bits}[j] = 0$
		\EndIf
		\EndFor
		\State $\mathcal{W} = \text{bits}$
		\State $ID_{ext}, T_{ext}, h_{96}, \text{crc} = \text{ParsePayload}(\mathcal{W})$
		\If{$\text{CRC32}(ID_{ext} \parallel T_{ext} \parallel h_{96}) \neq \text{crc}$}
		\State \Return $(\text{False}, \emptyset)$
		\EndIf
		\If{$ID_{ext} \neq ID_{exp}$ \textbf{or} $|T_{ext} - T_{exp}| > \Delta_T$}
		\State \Return $(\text{False}, \emptyset)$
		\EndIf
		\State \Return $(\text{True}, \mathcal{W})$
	\end{algorithmic}
\end{algorithm}

Once Algorithm \ref{alg:rw} returns ``True", the next step verifies the bit level integrity of the embedded secret data using Cryptographic Hash verification (Algorithm \ref{alg:chv}). Algorithm \ref{alg:chv} takes embedded meaningful share ${\rm M}$, share identification ${\rm ID}$, timestamp ${\rm T}$ and master key ${\rm K_{m}}$ as input. The verification first generates a salt $\sigma$ using HMAC-based Key Derivation Function (HKDF) with ${\rm K_{m}}$ and share meta data, and then derives $256$ embedding positions ${\mathcal{P}}$. The depth map ${\rm D}$ is extracted from the share's reserved region. For each ${\mathcal{P}}$, embedded hash bits are identified as stored in ${\rm h_{stored}}$. Simultaneously, full embedded secret data is extracted using adaptive LSB extraction (Algorithm \ref{alg:ale}) guided by the depth map. Algorithm \ref{alg:ale} retrieves the embedding depth $d$ from depth map, extract $d$ least significant bits by applying bit-mask $((1 \ll d) - 1)$, and stored into corresponding positions of the secret data. The final verification is performed by computing hash and comparing it with  ${\rm h_{stored}}$.

\begin{algorithm}[!h]
	\caption{Cryptographic Hash Verification}
	\label{alg:chv}
	\begin{algorithmic}[1]
		\Require $M$, $ID$, $T$, $K_m$, image dimensions $H, W$
		\Ensure Verification status
		\State $\sigma = \text{DeriveSalt}(K_m, ID, T)$
		\State $\mathcal{P} = \text{DeriveHashPositions}(K_m, ID, T, H, W)$
		\State $D = \text{ExtractDepthMap}(M)$ 
		\State $H, W, C = \text{dimensions}(M)$
		\State $\text{stored\_bits} = []$
		\For{$(x,y,c) \in \mathcal{P}$}
		\State $d = D[x,y,c]$
		\If{$d \ge 1$}
		\State $p = M[x,y,c]$
		\State $b = (p \gg (d-1)) \& 1$
		\State $\text{stored\_bits}.\text{append}(b)$
		\Else
		\State \Return $\text{False}$
		\EndIf
		\EndFor
		\State $h_{stored} = \text{BitsToBytes}(\text{stored\_bits})$
		\State $S = \text{ExtractAdaptiveLSB}(M, D)$ \Comment{Algorithm \ref{alg:ale}}
		\State $h_{curr} = \text{SHA256}(S \parallel \sigma)$
		\If{$h_{curr} = h_{stored}$}
		\State \Return $\text{True}$
		\Else
		\State \Return $\text{False}$
		\EndIf
	\end{algorithmic}
\end{algorithm}

In patch reconstruction, if any of the verification (Algorithm \ref{alg:rw} or Algorithm \ref{alg:chv}) fails, the proposed technique rejects the data for reconstruction and training (as shown in Table \ref{tab:dm}). This ensures that only authentic shares proceeds to patch reconstruction and training. The proposed technique thus provides as a defense‑in‑depth mechanism, in which watermark layer capture structural tampering, where as hash catches bit level tampering. An attack that must compromise both layers to go undetected is computationally infeasible. The dual layer integrity verification ensures that only authentic, tamper-free shares proceeds to patch reconstruction and training, thus preventing from data poisoning attacks. The additional supporting algorithms are given in supplementary material.

\begin{algorithm}[!h]
	\caption{Adaptive LSB Extraction}
	\label{alg:ale}
	\renewcommand{\arraystretch}{1}
	\centering
	\begin{algorithmic}[1]
		\Require $M$, $D$
		\Ensure Secret data $S$
		
		\State $H, W, C = \text{dimensions}(M)$
		\State $S = \text{array of zeros}(H, W, C)$
		
		\For{$x = 0$ \textbf{to} $H-1$}
		\For{$y = 0$ \textbf{to} $W-1$}
		\For{$c = 0$ \textbf{to} $C-1$}
		\State $d = D[x,y,c]$
		
		\If{$d \ge 1$}
		\State $p = M[x,y,c]$
		\State $S[x,y,c] = p AND ((1 \ll d) - 1)$
		\Else
		\State $S[x,y,c] = 0$
		\EndIf
		\EndFor
		\EndFor
		\EndFor
		
		\State \Return $S$
		
	\end{algorithmic}
\end{algorithm}

\begin{table*}[!h]
	\centering
	\caption{Decision matrix for dual-layer tamper detection}
	\label{tab:dm}
	\begin{tabular}{cccl}
			\toprule
			\textbf{\begin{tabular}[c]{@{}c@{}}Watermark\\ Status\end{tabular}} & \textbf{\begin{tabular}[c]{@{}c@{}}Hash \\ status\end{tabular}} & \textbf{Decision} & \multicolumn{1}{c}{\textbf{Action}}                                               \\ \midrule
			Valid                                                               & Matches                                                         & Accept            & \begin{tabular}[c]{@{}l@{}}Proceed to reconstruction\end{tabular}              \\
			Valid                                                               & Mismatches                                                      & Reject            & \begin{tabular}[c]{@{}l@{}}Hash tampered; request retransmission\end{tabular}  \\
			Corrupted                                                           & Any                                                             & Reject            & \begin{tabular}[c]{@{}l@{}}Watermark tampered; log and retransmit\end{tabular} \\
			Missing                                                             & Any                                                             & Reject            & \begin{tabular}[c]{@{}l@{}}Severe tampering; forensic flag\end{tabular}        \\ \bottomrule
	\end{tabular}
\end{table*}

After the dual verification and extraction of embedded data, the original patch is reconstructed as,
\begin{align}	
	\overline{P_i} = \overline{AS} \oplus \overline{E_i}
\end{align}  
where $\overline{P_i}$, $\overline{AS}$, and $\overline{E_i}$ are the $i^{th}$ reconstructed patch, extracted data from corresponding ${\rm M_{auth}}$ , and $i^{th}$ private share extracted from ${\rm M_{i}}$, respectively. 

After the patch reconstruction, the reconstructed patches are proceeded to multi-patch training network (MPTN). MPTN consists of $N_p$ patch training networks and an aggregator. Each patch training network used a MobileNet backbone to extract 512 dimensional feature embedding from the corresponding patch. Aggregator consist of fully connected layer that concatenate and consolidate all $N_p$ patch feature embeddings into a single 512 dimensional global face embeddings.

\section{Experimental results and discussion}
\label{sec:ex}

\subsection{Experimental setup}
\textbf{Dataset used}: The proposed technique used VGGFace2 dataset~\cite{cao2018vggface2} for the evaluation. As the proposed technique relies only on front facing images, a quality driven filtering technique~\cite{medvedev2023improving} with False Rejection Rate (FRR) of $0.05$ is adopted for front facing image selection. The selection of low FRR reduces the exclusion of genuine images by focusing on clear front facing images. Table \ref{tab:dataset} shows the further details of VGGFace2 dataset with adopted filtering. In the proposed method, the cover images are selected from three different datasets, including Places365\footnote{\url{https://www.kaggle.com/datasets/bobaaayoung/place365}}, ImageNet~\cite{deng2009imagenet}, chest Xray\footnote{\url{https://www.kaggle.com/datasets/animeshshedge/chest-x-rays-of-14-common-disease?select=XRays}}.

\begin{table}[!htb]
	\centering
	
	\caption{VGGFace2 dataset details before and after quality driven filtering.}
	\label{tab:dataset}
	\begin{tabular}{cccc}
		\hline
		\textbf{Filtering}                                            & \textbf{\begin{tabular}[c]{@{}c@{}}\# images\end{tabular}} & \textbf{\begin{tabular}[c]{@{}c@{}}\# Classes\end{tabular}} & \textbf{\begin{tabular}[c]{@{}c@{}}\# Average Images \\per Class\end{tabular}} \\ \hline
		
		\begin{tabular}[c]{@{}c@{}}None\end{tabular} & 3074k                                                               & 8631                                                                  & 356                                                                 \\
		\begin{tabular}[c]{@{}c@{}}FRR=0.05\end{tabular} & 1158K                                                               & 8628                                                                  & 134                                                                 \\
		\hline
	\end{tabular}
\end{table}

\textbf{Preprocessing}: During the proposed technique preprocessing, a user face is detected and segmented using Histogram of Oriented Gradients (HOG)-based (or MTCNN) method combined with linear classifier. For the experimentation, six patches (${\rm N_p}$=
6) are used including both eyes, both eyebrows, nose, and mouth. During preprocessing, ${\rm N_p}$ patches of the frontal face images are extracted using a pre-trained model\footnote{\url{https://www.kaggle.com/code/zeyadkhalid/face-landmarks-detection-and-alignment-dlib}}.

\textbf{Models}: The proposed method's training uses Patch Training Networks (PTN) and Aggregator. PTN include MobileNet backbone with an output dense patch feature layer. Patch features are concatenated and fed into the Aggregator. PTN is optimized using SGD optimizer with momentum 0.5, initial learning rate 0.01, and Cosine Annealing schedule ($\eta_{\mathrm{max}}$ = 0.01, $\eta_{\mathrm{min}}$ = $1\mathrm{e}{-7}$) over 20 epochs. Training processes 96$\times$96 RGB images of all six facial patches with batch size of 10. Images are normalized by subtracting [0.5, 0.5, 0.5] and scaling $\frac{1}{255}$. We train categorical cross-entropy loss with 1.0 weights for all patches in 8628 classes. Important architecture parameters include a width multiplier of 1.4 and an ArcFace margin of 0.5. The Aggregator uses a simple fully connected layer to combine patch feature outputs into a global feature vector. We also examined a modified PTN that added patch-level classification supervision, turning the problem into a multitask learning approach where PTN and Aggregator perform the same classification task. This improves patch-level supervision and patch selection and augmentation flexibility. 

\subsection{Training data analysis}
\begin{figure*}
	\centering
	\includegraphics[width=0.75\linewidth]{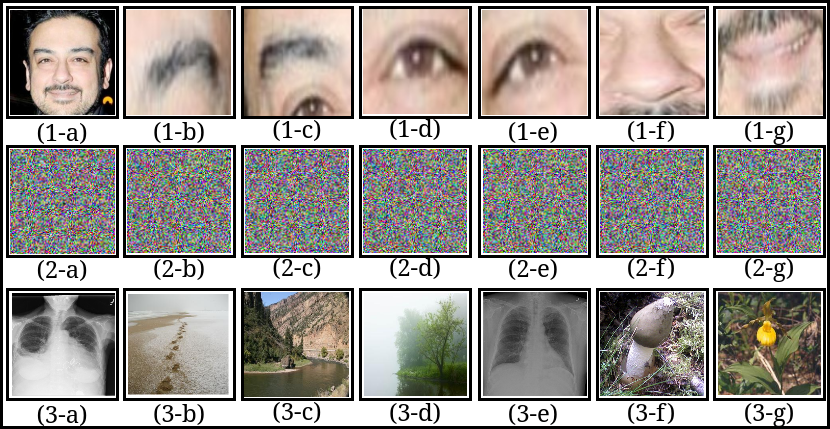}
	\caption{Sample meaningful shares generated. 1-a: A face image, 1-b to 1-g: Six face patches, 2-a: Authentication share, 2-b to 2-g: Private shares, 3-a to 3-g: Cover images after share embedding and watermarking}
	\label{fig:sample}
\end{figure*}
The proposed method gives participating institutions visually meaningful shares instead of noise-like secret shares. In contrast to traditional VSS, meaningful shares look like normal images like medical X-rays, landscape photos, or abstract patterns, depending on the deployment context (see Figure \ref{fig:sample}). This introduces three major benefits for storage institutions such as stealth and regulatory acceptance: By resembling normal data, such as medical images, shares are not detected by security audits or adversaries, reducing the risk of targeted collection and harvest-now-decrypt-later attacks. Second, simplified data management: Institutions can store these shares using their image storage infrastructure without encryption detection or handling. Third, operational resilience: Even if an institution suffers a data breach, these meaningful share contains no information about the original facial data. Without authentication share (stored exclusively by a trusted third party) and master key, an attacker cannot reconstruct any facial patch. Institutions can maintain legal compliance, public trust, and collaborative FR training by hosting privacy-preserving face data without becoming high-value targets.

Traditional FRS replicates the same facial datasets across multiple institutions for training, resulting in redundant storage, inconsistent data management, and increased privacy risks, as each copy is an attack point. The proposed technique solves this problem by distributing only meaningful private shares across institutions, which is unique and useless. Since no two institutions store the same data, data replication is avoided and the storage footprint is much lower than with full dataset. The framework also integrates the RTBF property using ${\rm M_{auth}}$. The trusted third party deletes ${\rm M_{auth}}$ when a user withdraws consent. Without this share, the distributed private shares cannot be reconstructed into facial patches, rendering them useless for future training. Orphan private shares are deleted by periodic checking.

\subsection{Security analysis}
Table \ref{tab:aa} shows the attack analysis of hash layer, watermark layer, and combined layer on different attacks. Hash layer detects modifications on embedded data and watermark layer detects structural modification. In the proposed technique, both layers must be compromised to perform an undetected attack, which is cryptographically infeasible as it requires breaking SHA-256 and the master key ${\rm K_m}$. If one layer is compromised, the other layer detects attacks and requests for retransmission if necessary.
\begin{table*}[!h]
	\caption{Attack analysis of hash layer, watermark layer, and combined detection} \label{tab:aa}
	\centering
		\begin{tabular}{lccc} \\ \toprule
			\multicolumn{1}{c}{\textbf{Attack Type}} & \textbf{Hash Layer} & \textbf{Watermark Layer} & \textbf{Combined Detection} \\ \midrule
			Single-bit flip in secret                 & \cmark                   & \xmark                       & \cmark                           \\
			Multiple-bit flip (random)                & \cmark                  & \cmark                       & \cmark                          \\
			Cropping (10\% border)                    & \xmark (may miss)        & \cmark                       & \cmark(\textgreater{}98\%)      \\
			JPEG compression (Q $\ge$ 70)                   & \cmark(bits preserved)  & \cmark                       & \cmark                          \\
			JPEG compression (Q\textless{}50)         & \xmark (bits lost)       & \cmark                       & \cmark($\ge$ 85\%)                   \\
			Scaling (90\%)                            & \xmark (alignment loss)  & \cmark                       & \cmark($\ge$ 92\%)                   \\
			Share replacement                         & \xmark (hash matches)    & \cmark                       & \cmark                          \\
			Adversarial perturbation                  & \cmark                  & \cmark                       & \cmark($>$99\%)     \\ \bottomrule
		\end{tabular}  
\end{table*}

The proposed technique exhibits an effective balance between visual integrity and embedding capacity across all six facial patches (see Table \ref{tab:eq}). While Structural Similarity Index (SSIM) scores of $0.96$–$0.97$ verify nearly comparable structural preservation to the original cover images, Peak Signal-to-Noise Ratio (PSNR) values vary from $41.7$dB to $42.5$dB, significantly over the perceptually transparent threshold of $40$dB. The average embedding capacity is between $2.6$ and $2.9$ bits per pixel (bpp), which is more than the minimum of $2$ bpp needed for full facial patch embedding. These findings verify that the proposed technique generates meaningful shares that are visually undetectable and have enough capacity for realistic privacy-preserving FR deployment.

\begin{table}[!htb]
	\centering
	
	\caption{Embedding quality analysis for all six facial patches} \label{tab:eq}
	\begin{tabular}{lccc}  \\ \toprule
		\textbf{Patch}         & \textbf{PSNR (dB)} & \textbf{SSIM} & \begin{tabular}[c]{@{}c@{}}\textbf{Embedding} \\ \textbf{Capacity (bpp)}\end{tabular} \\ \midrule
		Left Eyebrow  & 42.3      & 0.97 & 2.8                                                                 \\
		Right Eyebrow & 42.1      & 0.97 & 2.7                                                                 \\
		Left Eye      & 41.8      & 0.96 & 2.9                                                                 \\
		Right Eye     & 41.9      & 0.96 & 2.8                                                                 \\
		Nose          & 42.5      & 0.97 & 2.6                                                                 \\
		Mouth         & 41.7      & 0.96 & 2.9                                                                \\ \bottomrule
	\end{tabular}
\end{table}

\begin{table}[!h]
	\centering
	
	\caption{Model inversion attack results comparing ArcFace, VOIDFace, and Proposed technique} \label{tab:mi}
	\begin{tabular}{lcc} \toprule
		\textbf{Method}   & \multicolumn{1}{l}{\textbf{Attack Accuracy}} & \multicolumn{1}{l}{\textbf{KNN Distance}} \\ \midrule
		ArcFace~\cite{deng2019arcface}   & 82.4\%                                       & 1247.28                                   \\
		VOIDFace~\cite{voidface} & 12.1\%                                       & 2240.30                                   \\
		Proposed technique    & 11.8\%                                       & 2281.45         \\ \bottomrule                         
	\end{tabular}
\end{table}

Table \ref{tab:mi} shows the model inversion attack computed using the black-box attack simulation technique \cite{nguyen2023re}. Here, a pre-trained face.evoLve model (trained on the CelebA~\cite{liu2015deep} dataset) is used as the evaluation model and the aggregator model as the target model. While KNN distance quantifies the distance between the reconstructed samples and actual training data, a higher distance denotes stronger privacy protection, and attack accuracy measures the training data reconstruction accuracy. With a KNN distance of $1247.28$, ArcFace~\cite{deng2019arcface} achieves $82.4\%$ attack accuracy, indicating significant privacy leakage when the model memorizes certain face traits. On the other hand, both VOIDFace~\cite{voidface}, and proposed technique achieve significantly larger KNN distances of $2240.30$ and $2281.45$ while lowering attack accuracy to $12.1\%$ and $11.8\%$, respectively. As these frameworks only train on face patches instead of raw facial photos, the model is prevented from reconstruction attacks. The result confirms that the proposed technique preserves VOIDFace's strong privacy protection against reconstruction attacks. The marginal difference (12.1\% vs. 11.8\%) demonstrates that adding meaningful share generation and tamper detection does not compromise privacy.

\begin{table}[!h]
	\centering
	
	\caption{Steganalysis detection rates for meaningful shares} \label{tab:sa}
	\begin{tabular}{lcc} \toprule
		\textbf{Cover Type} & \multicolumn{1}{l}{\textbf{\begin{tabular}[c]{@{}l@{}}SRNet \\ Detection Rate\end{tabular}}} & \multicolumn{1}{l}{\textbf{\begin{tabular}[c]{@{}l@{}}XuNet \\ Detection Rate\end{tabular}}} \\ \midrule
		Medical             & 51.2\%                                                                                       & 50.8\%                                                                                       \\
		Natural             & 50.7\%                                                                                       & 51.1\%                                                                                       \\
		Abstract            & 49.8\%                                                                                       & 50.3\%     \\ \bottomrule                                                                                 
	\end{tabular}
\end{table}

The secrecy of shares is assessed in Table \ref{tab:sa} using two cutting-edge steganalysis methods, SRNet~\cite{boroumand2018deep} and XuNet~\cite{yang2024novel}. When a steganalyzer is unable to detect meaningful shares from cover images, the optimal stealthy embedding achieves a detection probability of 50\%, which is equal to random guessing. The detection rates for each of the three cover categories (Medical, Natural, and Abstract) vary from 49.8\% to 51.2\%, all within ±1.2\% of the optimal baseline. By preventing attackers from identifying which images contain embedded secrets, these results effectively counter harvest now decrypt later attacks assaults by confirming that meaningful shares are statistically indistinguishable from natural cover images.

\subsection{Performance analysis}
Here, the computation complexity of the proposed technique is analyzed using time and space complexity. Table \ref{tab:cc} shows the additional computational complexity associated with integrity verification. For each individual user, consisting of six private shares and one authentication share, the additional space overhead is 378 (54 $\times$ 7) bytes.

\begin{table}[!h]
	\centering
	
	\caption{Computational complexity analysis of dual-layer tamper detection.\textit{Platform: AMD Ryzen 5 5500, 32GB RAM, Ubuntu 24.04.4 LTS, Python 3.10.9}} \label{tab:cc}
	\begin{tabular}{llcc} \\ \toprule
		\multirow{6}{*}{\begin{tabular}[c]{@{}l@{}}Time\\ complexity\end{tabular}}   & \begin{tabular}[c]{@{}l@{}}Hash computation \\ (SHA-256)\end{tabular} & \multicolumn{2}{c}{$\sim$2 $\mu s$}   \\
		& Hash embedding                                                        & \multicolumn{2}{c}{$\sim$15 $\mu s$}  \\
		& Hash verification                                                     & \multicolumn{2}{c}{$\sim$18 $\mu s$}  \\
		& \begin{tabular}[c]{@{}l@{}}Watermark embedding \\ (DWT)\end{tabular}  & \multicolumn{2}{c}{$\sim$120 $\mu s$} \\
		& Watermark extraction                                                  & \multicolumn{2}{c}{$\sim$110 $\mu s$} \\
		& Total Verification                                                    & \multicolumn{2}{c}{$\sim$128 $\mu s$} \\ \hline
		\multirow{3}{*}{\begin{tabular}[c]{@{}l@{}}Space \\ complexity\end{tabular}} & Hash value                                                            & 256 bits         & 32 bytes      \\
		& Watermark (embedded)                                                  & 176 bits         & 22 bytes      \\
		& Total                                                                 & 432 bits         & 54 bytes    \\ \bottomrule 
	\end{tabular}
\end{table}

\begin{figure}[!htb]
	\centering
	\includegraphics[width=0.99\linewidth]{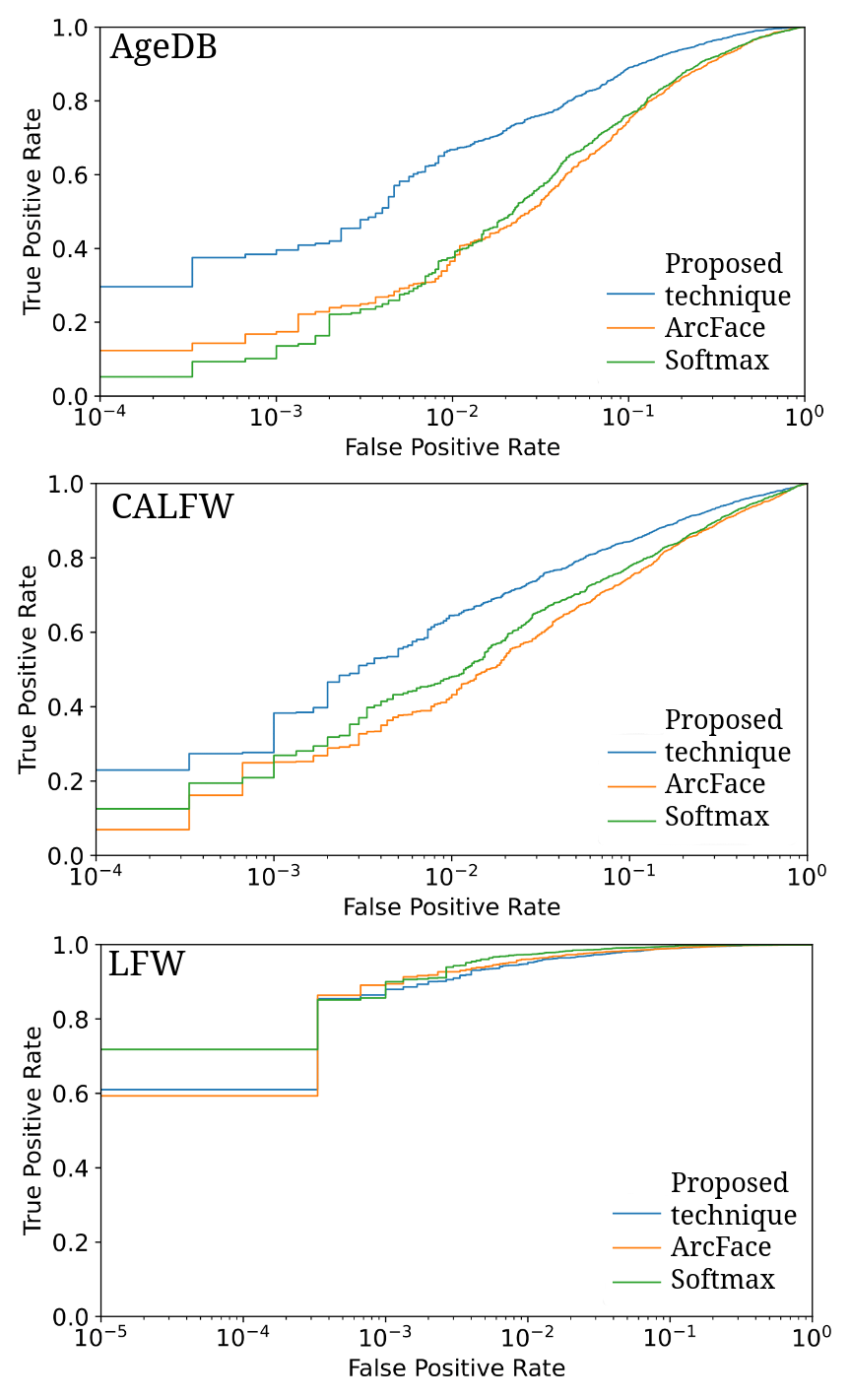}
	\caption{ROC curves of proposed technique, Arcface, and Softmax models on various benchmarks.}
	\label{fig:performance_11}
\end{figure}

We evaluated the proposed technique with traditional deep network FR training techniques and compared the performance. In particular, we used a ResNet50 backbone to assess our model against Softmax loss~\cite{liu2016large} and ArcFace~\cite{deng2019arcface}. All models were trained for 20 epochs using SGD with a gradually decreasing learning rate ($0.01$ to $0.00001$), initialized with ImageNet weights, and evaluated on LFW \cite{LFWTechUpdate}, CALFW \cite{CALFW}, and AgeDB-30 \cite{agedb30}. These benchmarks are chosen as they do not exhibit excessive pose fluctuations. Figure \ref{fig:performance_11} shows that the proposed technique performs better as compared with both Softmax and ArcFace on all benchmarks. These findings demonstrate that patch-based multi-network training with meaningful share generation and tamper detection adds privacy and integrity guarantees while maintaining recognition accuracy.

\section{Conclusion and future works}
\label{sec:co}

The proposed work provides a novel FR framework that transforms noise-like secret shares into visually meaningful cover images and provides cryptographic tamper detection. The two main drawbacks of conventional VSS techniques are share conspicuousness and lack of integrity verification. The proposed technique addresses these problems. The technique achieves statistical indistinguishability from natural images (PSNR $>$40 dB, SSIM $>$0.95) while retaining perfect secrecy guarantees by employing adaptive LSB steganography to embed noise-like shares into harmless cover images. By combining a semi-fragile DWT spread-spectrum watermark with a fragile cryptographic hash (SHA-256), the dual-layer tamper detection system achieves $>$99.9\% detection rates against bit-flipping, cropping, substitution, and compression attacks. The technique also reduces model inversion attack success rates to 11.8\%  while maintaining competitive FR accuracy. The architecture is appropriate for sensitive deployments like government identity management, border security, and healthcare since it completely maintains the RTBF property and also removes data replication.

In the future, we plan to extend our evaluation to more challenging datasets. Additionally, the framework will be expanded to include other biometrics like voice, iris, and fingerprints. This will require changing the adaptive embedding method to work with grayscale and 1D signal domains. Self-recovery methods will also be looked into so that corrupted shares can be automatically fixed when tampering is found, and reduces retransmission. Also, improvements will be made for edge devices with limited resources, like smartphones and Internet of Things (IoT) cameras using light steganography and fast DWT implementations to allow share generation and verification on the device itself.

\section*{Acknowledgment}
This project was funded by the EU’s Horizon Europe project ACHILLES under Grant Agreement No $101189689$, and from national funds through FCT - Fundação para a Ciência e a Tecnologia, under the project $UID/00048$.

{\small
\bibliographystyle{ieee}
\bibliography{ref}
}

\twocolumn[
\begin{@twocolumnfalse}
	\section*{\centering \Large{{Supplementary Material} -- ``From Noise to Meaning: Meaningful Secret Sharing with Tamper Detection for Facial Recognition''}}
\end{@twocolumnfalse}
]

\newtheorem{theorem}{Theorem} 
\newtheorem{proof}{Proof}

This supplementary material contains four sections, including the algorithms used, additional security experiments, attack scenarios and interception, and performance analysis. These experiments are conducted to reinforce the effectiveness of the proposed  framework.

\section{Additional supporting algorithms used}

\begin{algorithm}

	\caption{Salt Derivation using HKDF}
	\label{alg:salt}
	\begin{algorithmic}[1]
		\Require Master key $K_m$ (256 bits), share identifier $ID$, timestamp $T$
		\Ensure Salt $\sigma$ (128 bits)
		
		\Function{DeriveSalt}{$K_m, ID, T$}
		
		\State $\text{info} = \text{``hash\_salt"} \parallel ID \parallel T$
		\State $\sigma = \text{HKDF}(K_m, \text{info}, \text{length}=16)$
		
		\State \Return $\sigma$
		
		\EndFunction
	\end{algorithmic}
\end{algorithm}

\begin{algorithm}
	\caption{Hash Position Derivation}
	\label{alg:positions}
	\begin{algorithmic}[1]
		\Require Master key $K_m$, share identifier $ID$, timestamp $T$, image height $H$, image width $W$
		\Ensure Set of 256 positions $\mathcal{P} = \{(x_k, y_k, c_k)\}_{k=0}^{255}$
		
		\Function{DeriveHashPositions}{ $K_m, ID, T, H, W$}
		
		\State $\text{info} = \text{"hash\_positions"} \parallel ID \parallel T$
		\State $\text{seed} = \text{HKDF}(K_m, \text{info}, \text{length}=32)$
		
		\State $\text{prng} = \text{CSPRNG}(\text{seed})$ 
		
		\State $\mathcal{P} = \emptyset$
		
		\For{$k = 0$ \textbf{to} $255$}
		\State $x = \text{prng}.\text{next}() \bmod H$
		\State $y = \text{prng}.\text{next}() \bmod W$
		\State $c = \text{prng}.\text{next}() \bmod 3$
		\State $\mathcal{P} = \mathcal{P} \cup \{(x, y, c)\}$
		\EndFor
		
		\State \Return $\mathcal{P}$
		
		\EndFunction
	\end{algorithmic}
\end{algorithm}

The Algorithm \ref{alg:salt} shows the steps to produce unique 128-bit cryptographic salt $\sigma$ using HMAC-based Key Derivation Function (HKDF) on the master key ${\rm K_m}$. The algorithm first calculates an info string that combines the string ``hash\_salt", the share identifier ${\rm ID}$, and the timestamp ${\rm T}$. This deterministic construction allows the trusted third part or training institutions to reconstruct the identical salt during verification. This is used in the cryptographic hash verification, where it concatenated with secret data ${\rm S}$ (Algorithm 2 in main document). The salt $\sigma$ guarantees that identical secret data produce different hashes and protect from pre-computation attacks and cross-share correlation.

\begin{algorithm}
	\caption{Depth Map Extraction}
	\label{alg:depthmap}
	\begin{algorithmic}[1]
		\Require Meaningful share $M$ (size $H \times W \times C$)
		\Ensure Depth map $D$ (size $H \times W \times C$)
		
		\Function{ExtractDepthMap}{$M$}
		
		\State $H, W, C = \text{dimensions}(M)$
		\State $D = \text{array of zeros}(H, W, C)$
		
		\Comment{Depth map stored in first 2 rows (fixed depth 1)}
		\For{$x = 0$ \textbf{to} $1$}
		\For{$y = 0$ \textbf{to} $W-1$}
		\For{$c = 0$ \textbf{to} $C-1$}
		\State $p = M[x,y,c]$
		\State $D[x,y,c] = p \& 0x03$ \Comment{Extract 2 bits for depth (1-3)}
		\State $D[x,y,c] = D[x,y,c] + 1$ 
		\EndFor
		\EndFor
		\EndFor
		\Comment{Rest of depth map stored in remaining rows}
		\State $\text{pos} = 2$
		\For{$x = 2$ \textbf{to} $H-1$}
		\For{$y = 0$ \textbf{to} $W-1$}
		\For{$c = 0$ \textbf{to} $C-1$}
		\If{$\text{pos} < H$}
		\State $p = M[\text{pos}, y, c]$
		\State $D[x,y,c] = p \& 0x03$
		\State $D[x,y,c] = D[x,y,c] + 1$
		\EndIf
		\EndFor
		\EndFor
		\State $\text{pos} = \text{pos} + 1$
		\EndFor
		\State \Return $D$
		\EndFunction
	\end{algorithmic}
	
\end{algorithm}

Algorithm \ref{alg:positions} generates 256 positions in each meaningful shares where hash bits are embedded. Here, the info string concatinates ``hash\_positions", the share identifier ${\rm ID}$, and the timestamp ${\rm T}$. The algorithm produces 256 bit seed by applying HKDF with the master key ${\rm K_m}$. This seed generates a cryptographically secure pseudo-random number generator (CSPRNG) produce 256 random triplets. Each triplet with modulo ${\rm H}$, ${\rm W}$ and ${\rm 3}$ calculated the coordinates ${\rm x,y,z}$. As these positions are uniformly distributed across the shares, makes it hard for an intruder to locate and corrupt the hash without the master key ${\rm K_m}$. This algorithm also used in  Cryptographic Hash Verification (Algorithm 2 in th main paper) to regenerate the same hash embedded positions.

Algorithm \ref{alg:depthmap} calculates the adaptive LSB depth map from the meaningful shares. Here, the deapth maps are stored in the deterministic positions. The algorithm first extract the depth information from first two rows. The remianing depth values are extracted from subsequent rows following a sequential layout. This algorithm is used in both Cryptographic Hash Verification (Algorithm 2) and Adaptive LSB Extraction (Algorithm 3) in the main paper.

\section{Additional Security analysis}

\begin{table}[!h]
	\caption{Dual-layer tamper detection performance against different attack types} \label{tab:td}
	\centering
		\begin{tabular}{lcc} \toprule
			\textbf{Attack Type}     & \textbf{Det. Rate} & \textbf{FPR} \\ \midrule
			Random Bit Flips (1\%)   & 99.97\%                 & 0.02\%                       \\
			Random Bit Flips (5\%)   & 100.00\%                & 0.02\%                       \\
			Cropping (10\% border)   & 98.3\%                  & 0.05\%                       \\
			Share Substitution       & 100.00\%                & 0.01\%                       \\
			JPEG Compression (Q=70)  & 95.2\%                  & 0.10\%                       \\
			Gaussian Blur ($\sigma$=1.0)    & 91.8\%                  & 0.15\%                       \\
			Adversarial Optimization & 94.5\%                  & 0.08\%             \\ \bottomrule         
	\end{tabular}
\end{table}

The proposed dual-layer tamper detection system shows strong efficacy towards various attack scenarios. Table \ref{tab:td} shows the tamper detection analysis on proposed technique in various attack types. Detection rates for bit-flipping attacks show 99.97–100.00\%, accompanied by a small 0.02\% false positive rate (FPR). This shows that the cryptographic hash layer effectively detects any bit-level alterations to the embedded secret data. Structural attacks, including share substitution, are identified with complete accuracy, whereas cropping 10\% of the image border results in a 98.3\% detection rate showing the robust spread-spectrum watermark. Even with JPEG compression (Q=70) and Gaussian blur, detection accuracy exceeds 91\%, with false positives remaining below 0.15\%. The proposed dual-layer mechanism achieves over 99.9\% for direct tampering and over 91\% for compression-based attacks, show the possibility of practical deployment.

Few significant security therms derived from the proposed technique includes,
\begin{theorem}
	\textbf{Tamper detection}: Any tampering to ${\rm E_i}$ by modifying ${\rm M_i}$ can be detected with probability $1 - 2^{-256}$.
\end{theorem}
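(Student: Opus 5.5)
The plan is to reduce the statement to a collision/forgery event for the salted hash in Eq.~\ref{eq:ch}, under a random-oracle model for SHA-256 and with an adversary who does not hold $K_m$. Let $M_i'$ be the tampered share and $E_i'\neq E_i$ the secret data recovered from it by Algorithm~\ref{alg:ale}, using the depth map from Algorithm~\ref{alg:depthmap}. A tampering of $E_i$ goes undetected only if both layers accept. It therefore suffices to bound the probability that Algorithm~\ref{alg:chv} returns True on $M_i'$, since the watermark layer (Algorithm~\ref{alg:rw}) can only add further rejections.

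\textbf{Case 1: the adversary leaves the 256 hash positions unchanged.} Then $h_{stored}=h_i=\mathrm{SHA256}(E_i\|\sigma)$, while the verifier recomputes $h_{curr}=\mathrm{SHA256}(E_i'\|\sigma)$. Acceptance requires a second preimage on a 256-bit output. The salt $\sigma$ comes from HKDF under $K_m$ (Algorithm~\ref{alg:salt}) and is unknown to the adversary. Modelling SHA-256 as a random oracle, $h_{curr}$ is then uniform and independent of $h_{stored}$, so equality holds with probability $2^{-256}$.

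\textbf{Case 2: the adversary also rewrites bits at positions in $\mathcal{P}$.} The positions come from Algorithm~\ref{alg:positions} via a CSPRNG seeded by HKDF$(K_m,\cdot)$. Without $K_m$, the adversary cannot compute the new correct hash, because it does not know $\sigma$. It also does not know where the hash bits sit. So the string $h_{stored}'$ read back from its modification is, from its point of view, independent of the random-oracle value $\mathrm{SHA256}(E_i'\|\sigma)$. Equality again holds with probability at most $2^{-256}$. This argument requires HKDF and the CSPRNG to be modelled as pseudorandom functions, so that $\sigma$ and $\mathcal{P}$ look uniform to the adversary, up to a negligible advantage that I will fold into the bound.

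Combining the two cases, the detection probability is at least $1-2^{-256}$, up to negligible PRF terms. The watermark check can be invoked only as additional slack.

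\textbf{Main obstacle: the self-referential layout.} The hash bits and the depth map are stored in the same LSB plane as $E_i$. Modifying $M_i$ can therefore change $D$, or overlap $\mathcal{P}$ and change what is extracted as $S$. I will handle this by defining tampering relative to the extracted $S$: whatever $S'\neq S$ the verifier obtains, the analysis above applies to $S'$. I also need $S$ to exclude the hash-position bits, which I will assume the embedding reserves. There is a second issue: if the adversary tampers with the depth map so that some $d=0$ occurs, Algorithm~\ref{alg:chv} rejects outright, which only helps detection.

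Finally, the bound $1-2^{-256}$ is exact only under the idealized random-oracle assumption. I will state that assumption explicitly rather than claim unconditional security, since a generic forger could otherwise succeed with probability about $q\cdot 2^{-256}$ after $q$ queries.
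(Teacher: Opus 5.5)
Your proposal is correct and follows the same route as the paper's proof. That proof only observes that the embedded hash $h_i$ is deterministic, so any change to $E_i$ alters it except with collision probability $2^{-256}$, and that the watermark layer pushes detection further toward $1$. Your version is considerably more complete than that one-line argument. It makes the random-oracle idealization explicit. Your Case~2 uses the secrecy of $\sigma$ and $\mathcal{P}$ under $K_m$ to handle an attacker who rewrites the embedded hash, which the paper's proof never considers. You also deal with hash and depth-map bits overlapping the extracted $S$.

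You should state one assumption explicitly: each $(ID,T)$, and hence each salt $\sigma$, is used for a single share. Otherwise an attacker could substitute another validly generated share with the same metadata. That would break your claim that $h'_{stored}$ is independent of $\mathrm{SHA256}(S'\|\sigma)$.
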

\begin{proof}
	In the proposed technique, the embedded hash ${\rm h_i}$ is deterministic. Any tampering to ${\rm E_i}$ changes ${\rm h_i}$ and the collision probability is $2^{-256}$. When combines with watermark verification (layer 2), detection probability approaches $1$. 
\end{proof}

\begin{theorem}
	\textbf{Reconstruction fidelity}: Any shares that passes the integrity verification, the reconstructed patch ${\rm \bar{P_i}}$ will be identical to original patch ${\rm P_i}$
\end{theorem}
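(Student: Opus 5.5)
The argument goes through three facts: extraction is exact when nothing is tampered with, passing verification certifies that the extracted bits are unchanged, and XOR undoes itself. First I establish \emph{exact recoverability}. By Eq.~\ref{eq:ed} the depth $d(x,y)\in\{1,2,3\}$ is recorded in the depth map $D_i$. Algorithm~\ref{alg:ale} masks each pixel with $(1\ll d)-1$, which returns precisely the $d$ LSBs that were substituted during embedding. So, as long as the stored pixel values and $D_i$ are unchanged, $\overline{E_i}=E_i$ bit for bit, and likewise $\overline{AS}=AS$ from $M_{auth}$.

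The one technical point here is that the later layers must not disturb these bits. The hash bits occupy reserved positions $\mathcal{P}$ given by Algorithm~\ref{alg:positions}, and the depth map sits in the reserved rows read by Algorithm~\ref{alg:depthmap}. I will state as a standing assumption of the construction, as the system description implies, that the payload bits of $E_i$ are written only to non-reserved positions and that the DWT watermark is embedded before the LSB layer, or is otherwise arranged so that the extracted LSB payload is the one that was hashed. Under that assumption, $S=\mathrm{ExtractAdaptiveLSB}(M_i,D_i)$ computed at verification equals the data whose hash $h_i$ was stored.

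Second, I reduce ``passes verification'' to ``unchanged payload.'' Suppose a share $M_i'$ is accepted by both Algorithm~\ref{alg:rw} and Algorithm~\ref{alg:chv}. Then $\mathrm{SHA256}(S'\parallel\sigma)=h_{stored}$. I split into two cases.
\begin{itemize}
\item[(a)] The stored hash bits are intact. Then equality forces $S'=S$, except with probability $2^{-256}$ by collision resistance, exactly as in the preceding Tamper detection theorem.
\item[(b)] The adversary rewrote both $S'$ and the stored hash bits. This requires knowing $\sigma$ and $\mathcal{P}$, both derived from $K_m$ via HKDF (Algorithms~\ref{alg:salt} and~\ref{alg:positions}). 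It also requires forging the watermark payload $h_{96}$ bound in Eq.~\ref{eq:wp}, which is checked by Algorithm~\ref{alg:rw}. Without $K_m$ this succeeds only with negligible probability.
\end{itemize}
This second step is the main obstacle. The honest statement is computational: fidelity holds except with negligible probability in the security parameter, under PRF security of HKDF and collision resistance of SHA-256. It is not deterministic. I would phrase the theorem's conclusion that way, first trying a reduction in which any forger yields either a SHA-256 collision or a distinguisher for the key-derived positions and salt.

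Third, I conclude by the XOR identity. With $\overline{E_i}=E_i$ and $\overline{AS}=AS$ (the latter verified the same way on $M_{auth}$), Eq.~\ref{eq:1} gives
\[
\overline{P_i}=\overline{AS}\oplus\overline{E_i}=AS\oplus P_i\oplus AS=P_i,
\]
using associativity of $\oplus$ and $AS\oplus AS=0$. This is an exact, lossless equality, in contrast with halftone-based VSS, and it completes the argument.
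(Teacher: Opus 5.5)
Your proposal follows the same route as the paper's proof: reversibility of the adaptive LSB embedding plus successful dual verification gives $\overline{E_i}=E_i$ and $\overline{AS}=AS$, and $AS\oplus AS=0$ then gives $\overline{P_i}=P_i$. It is also more careful than the paper's one-line argument, since it makes explicit that the conclusion holds only up to negligible probability, inherited from the preceding Tamper detection theorem. Two small corrections: the paper embeds the watermark after the LSB and hash layers, so your non-interference condition is an added hypothesis (the same one hidden in the paper's word ``reversible'') rather than something the system description implies; and Algorithm~\ref{alg:rw} checks only the CRC and the ID/timestamp and never compares $h_{96}$ with the recomputed hash, so your case (b) must rest on the secrecy of the key-derived $\sigma$ and $\mathcal{P}$, not on the watermark.
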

\begin{proof}
	As the embedding procedure is reservable, and with successful dual verification, the extracted ${\rm \bar{E_i}}$ = ${\rm E_i}$ and ${\rm \bar{AS}}$ = ${\rm AS}$. At XOR reconstruction, ${\rm \bar{P_i}}$ = ${\rm E_i}$ $\oplus$ ${\rm AS}$ = (${\rm P_i}$ $\oplus$ ${\rm AS}$) $\oplus$ ${\rm AS}$) = ${\rm P_i}$
\end{proof}

\begin{table}[!htb]
	\centering
	\caption{NPCR values (\%) for 1000 distinct shares generated per patch}
	\label{tab:npcr}
	\centering
		\begin{tabular}{cccccc}
			\hline
			\begin{tabular}[c]{@{}c@{}}\textbf{Left} \\ \textbf{eyebrow}\end{tabular} & \begin{tabular}[c]{@{}c@{}}\textbf{Right} \\ \textbf{eyebrow}\end{tabular} & \begin{tabular}[c]{@{}c@{}}\textbf{Left} \\ \textbf{eye}\end{tabular} & \begin{tabular}[c]{@{}c@{}}\textbf{Right} \\ \textbf{eye}\end{tabular} & \textbf{Nose} & \textbf{Mouth} \\ \hline 98.89  & 98.87 & 98.60  &  98.79 & 98.64   &  98.87     \\ \hline
	\end{tabular}
\end{table}

Table \ref{tab:npcr} shows Number of Pixel Change Rate (NPCR) for $1000$ unique VSS shares (meaningless) produced for each facial patch. NPCR values above 98\% shows strong encryption resilience, which calculates the pixel-wise difference between two randomly generated shares from the same patch. With NPCR values ranging from $98.60\%$ to $98.89\%$, all six patches show strong and secure encryption. These findings verify that the proposed technique offers a strong defense against statistical and differential attacks by producing highly randomized shares.

\section{Attack scenario and interception}

\begin{figure*}[!htp]
	\centering
	\includegraphics[width=0.9\linewidth]{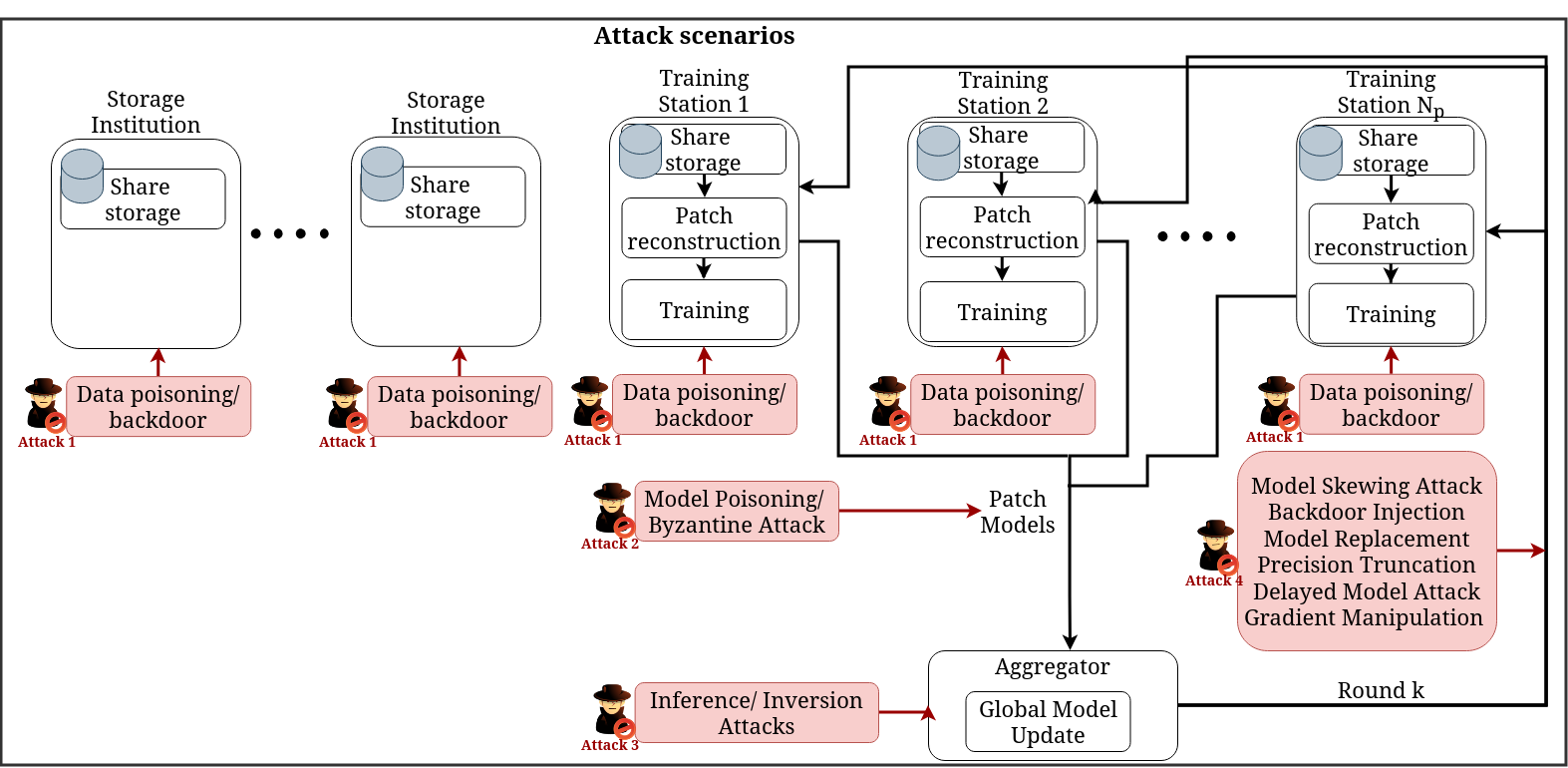}
	\caption{Attack surface of distributed multi-patch FR data storage and training}
	\label{fig:as}
\end{figure*}

Figure \ref{fig:as} shows the extensive attack scenarios of a distributed FRS, such as VOIDFace~\cite{voidface}, Federated learning~\cite{woubie2024maintaining} and proposed technique. The vulnerabilities include at storage institutions (data poisoning and backdoor injection), training workstations (Byzantine attacks, model skewing, backdoor injection, model replacement, and gradient manipulation), and the aggregator (model inversion and membership inference attacks). 

As VOIDFace lacks integrity verification, a tampered share might lead to a faulty patch reconstruction. This allows corrupted patches to get into training pipeline undetected. This can lead to local model poisoning, gradient skewness, and global model compromise through backdoor or accuracy degradation. The proposed dual layer tamper detection intercepts data poisoning before patch reconstruction. The shares failed in these verification never get into training pipeline while the verified patches contribute to training. Additionally, advanced attacks such as model skewing, backdoor injection, replacement, precision truncation, delayed attacks, and gradient manipulation are to an extend ineffective as a training node without verified shares cannot create a coherent local model. The proposed dual layer verification confirms that the training excludes only the corrupted patch rather than discarding the user sample or incorporating malicious data. Due to this, proposed technique provides a defense in depth (share-level tamper detection, selective patch exclusion during reconstruction, and broken attack chains across the distributed pipeline), as compared to VOIDFace framework.

\section{Additional performance analysis}

The proposed multi-patch-network framework comprises two principal components: the \textit{Patch Training Network (PTN)} and the \textit{Aggregator}. This framework allocates the FR task across patches and subsequently integrates their learned representations into a global embedding. The framework employs six patches thus requires six distinct \textit{PTN}, each dedicated to analyzing a particular patch. Each \textit{PTN} consists of a MobileNet backbone CNN that derives distinctive feature embeddings from the corresponding reconstructed patch. These CNNs function concurrently, allowing the model to autonomously acquire region-specific facial characteristics. The outputs of the six \textit{PTN} (512-dimensional feature vectors) are subsequently transmitted to the \textit{Aggregator}. The \textit{Aggregator} is executed as a basic fully connected layer, designed to consolidate the patch-level embeddings into a singular global feature representation of the complete face. This final embedding, also 512-dimensional, aims to learn identity-related information by integrating both local and global facial features.

\begin{figure*}[!htb]
	\centering
	\includegraphics[width=0.7\linewidth]{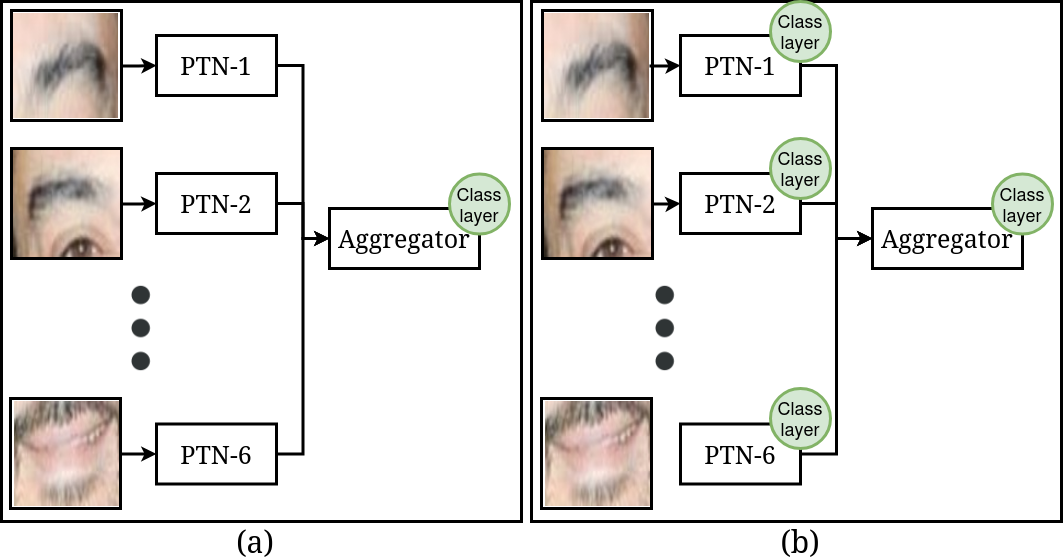}
	\caption{Multi‑Patch Training Network architectures. (a) Version 1 (V1) with supervision only on Aggregator. (b) Version 2 (V2) with supervision on both individual PTN and Aggregator.}
	\label{fig:schematics}
\end{figure*}

In the experiments, we examine two architectural variations of the multi-patch-network framework. Version 1 (V1) employs a single-task learning methodology in which only the \textit{Aggregator} generates a class prediction and \textit{PTN} function solely as feature extractors, with supervision of the final integrated embedding occurring exclusively through classification loss. Whereas, Version 2 (V2) implements a multi-task learning framework by incorporating classification heads at the output of each \textit{PTN}, overseeing both the individual patches and the \textit{Aggregator} for the identical identity classification task. This patch-level supervision enhances local discriminative feature learning and delivers more robust training results. Both V1 and V2 schematics are shown in Figure \ref{fig:schematics}.

\begin{figure}
	\centering
	\includegraphics[width=0.9\linewidth]{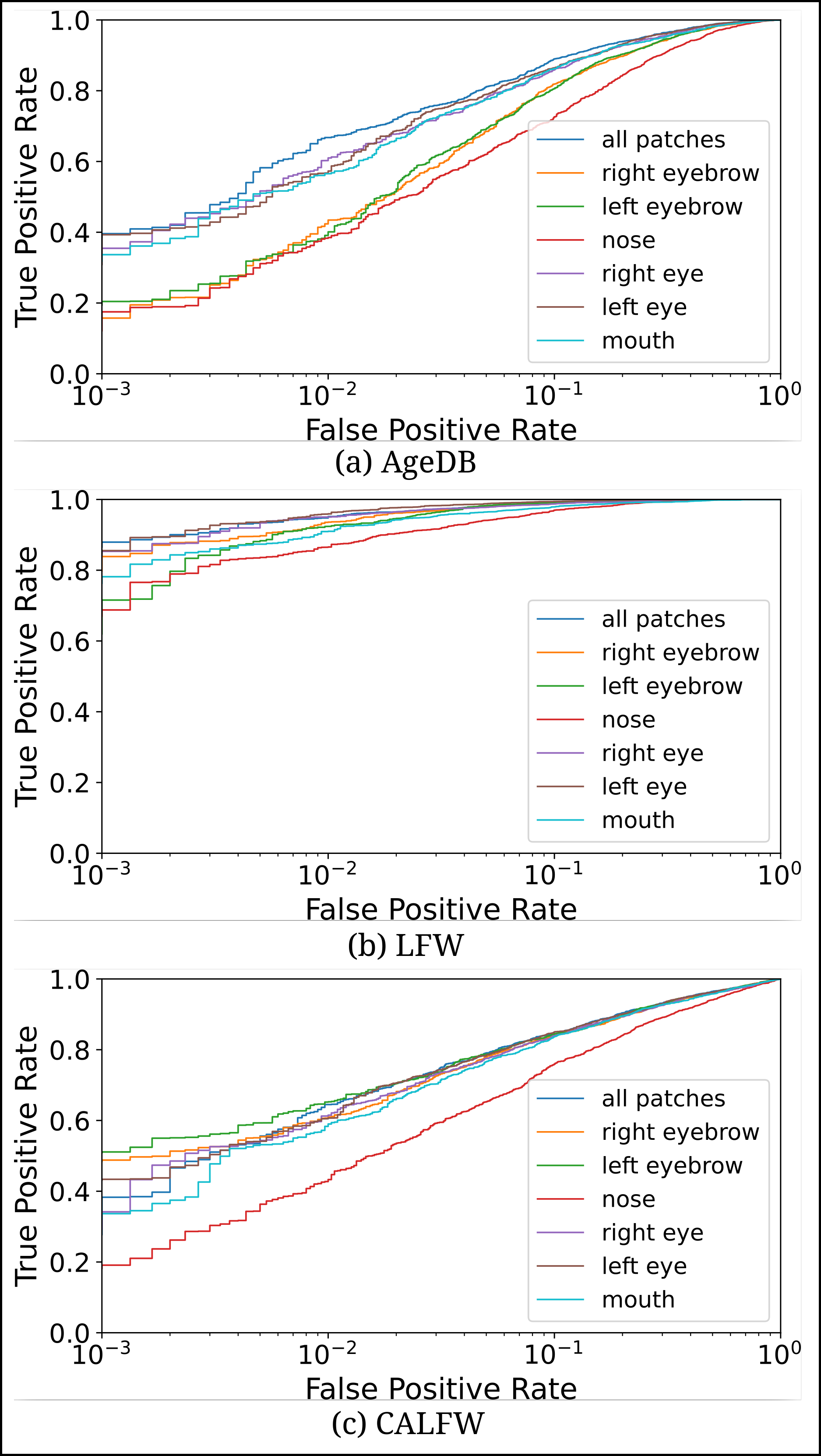}
	\caption{ROC curves of single patch models on AgeDB-30~\cite{agedb30}, LFW~\cite{LFWTechUpdate}, and CALFW~\cite{CALFW} datasets with architecture V1}
	\label{fig:sptav1}
\end{figure}

\begin{figure}
	\centering
	\includegraphics[width=0.9\linewidth]{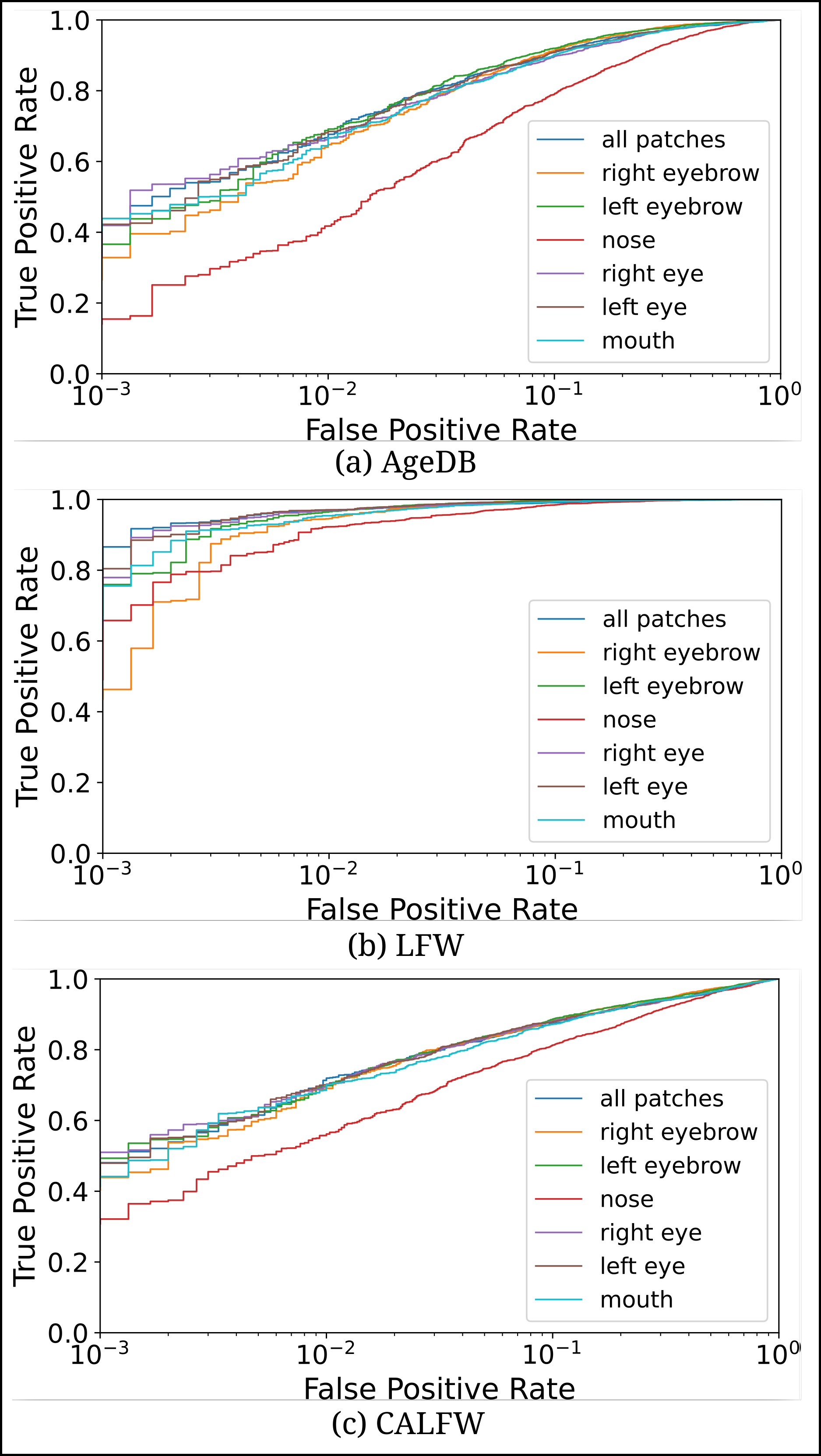}
	\caption{ROC curves of single patch models on AgeDB-30~\cite{agedb30}, LFW~\cite{LFWTechUpdate}, and CALFW~\cite{CALFW}datasets with architecture V2}
	\label{fig:sptav2}
\end{figure}

The important advantages of proposed technique in comparison with VOIDFace~\cite{voidface} is the ability to identify tampered patches and omit this for training. In VOIDFace, the patch reconstruction is performed without authentication check, and a tampered share can corrupt a patch. Using such corrupted patch for training causes backpropagation gradient errors, model accuracy loss, unstable convergence, and even targeted backdoor injection. 

In this experiment, to replicate a realistic tampering scenario, we evaluated the effects of excluding specific patches during inference. This simulate the identification and dismissal of a compromised share after integrity verification. Figure \ref{fig:sptav1} and Figure \ref{fig:sptav2} show the single patch tamper analysis on different benchmark using architecture V1 and V2, respectively. These figures shows that the proposed technique can detect the tampered share using dual layer tamper detection (cryptographic hash and robust watermark) and exclude the tampered patch from training process. This exclusion safeguards the global model from contaminated data while retaining all the other untempered face patch information. This experiment also shows an exclusive analysis on the significant of each patches and illustrates the practical advantage of the proposed technique.


\end{document}